
\documentclass{article}

\usepackage{microtype}
\usepackage{graphicx}
\usepackage{subfigure}
\usepackage{booktabs} 

\usepackage{amsmath}
\usepackage{amssymb}
\usepackage{amsthm}
\usepackage{multirow}
\usepackage{algorithm}
\usepackage{algorithmic}
\newtheorem{theorem}{Theorem}
\newtheorem{proposition}{Proposition}
\newtheorem{lemma}{Lemma}

\usepackage{hyperref}


\usepackage[accepted]{icml2018}


\icmltitlerunning{GAIN: Missing Data Imputation using Generative Adversarial Nets}

\begin{document}

\twocolumn[
\icmltitle{GAIN: Missing Data Imputation using Generative Adversarial Nets}



\icmlsetsymbol{equal}{*}

\begin{icmlauthorlist}
\icmlauthor{Jinsung Yoon}{to,equal}
\icmlauthor{James Jordon}{goo,equal}
\icmlauthor{Mihaela van der Schaar}{to,goo,ati}
\end{icmlauthorlist}

\icmlaffiliation{to}{University of California, Los Angeles, CA, USA}
\icmlaffiliation{goo}{University of Oxford, UK}
\icmlaffiliation{ati}{Alan Turing Institute, UK}

\icmlcorrespondingauthor{Jinsung Yoon}{jsyoon0823@gmail.com}


\vskip 0.3in
]



\printAffiliationsAndNotice{\icmlEqualContribution} 

\begin{abstract}
We propose a novel method for imputing missing data by adapting the well-known Generative Adversarial Nets (GAN) framework. Accordingly, we call our method Generative Adversarial Imputation Nets (GAIN). The generator ($G$) observes some components of a real data vector, imputes the missing components conditioned on what is actually observed, and outputs a completed vector. The discriminator ($D$) then takes a completed vector and attempts to determine which components were actually observed and which were imputed. To ensure that $D$ forces $G$ to learn the desired distribution, we provide $D$ with some additional information in the form of a {\em hint} vector. The hint reveals to $D$ {\em partial} information about the missingness of the original sample, which is used by $D$ to focus its attention on the imputation quality of particular components. This hint ensures that $G$ does in fact learn to generate according to the true data distribution. We tested our method on various datasets and found that GAIN significantly outperforms state-of-the-art imputation methods.

\end{abstract}

\section{Introduction}\label{sec:introduction}
Missing data is a pervasive problem. Data may be missing because it was never collected, records were lost or for many other reasons. In the medical domain, the respiratory rate of a patient may not have been measured (perhaps because it was deemed unnecessary/unimportant) or accidentally not recorded \cite{yoon_jbhi,ahmed_tbme}. It may also be the case that certain pieces of information are difficult or even dangerous to acquire (such as information gathered from a biopsy), and so these were not gathered for those reasons \cite{yoon_plosone}. An imputation algorithm can be used to estimate missing values based on data that was observed/measured, such as the systolic blood pressure and heart rate of the patient \cite{yoon_deep}. A substantial amount of research has been dedicated to developing imputation algorithms for medical data \cite{Medimpute1,Medimpute2,Medimpute3,Medimpute4}. Imputation algorithms are also used in many other applications such as image concealment, data compression, and counterfactual estimation \cite{Rubin,Missing_Book,yoon_ganite}. 

Missing data can be categorized into three types: (1) the data is missing completely at random (MCAR) if the missingness occurs entirely at random (there is no dependency on any of the variables), (2) the data is missing at random (MAR) if the missingness depends only on the {\em observed} variables\footnote{A formal definition of MAR can be found in the Supplementary Materials.}, (3) the data is missing not at random (MNAR) if the missingness is neither MCAR nor MAR (more specifically, the data is MNAR if the missingness depends on {\em both observed} variables and the {\em unobserved} variables; thus, missingness cannot be fully accounted for by the observed variables). In this paper we provide theoretical results for our algorithm under the MCAR assumption, and compare to other state-of-the-art methods in this setting\footnote{Empirical results for the MAR and MNAR settings are shown in the Supplementary Materials.}.

State-of-the-art imputation methods can be categorized as either discriminative or generative. Discriminative methods include MICE \cite{MICE,MICE-R}, MissForest \cite{missforest}, and matrix completion \cite{Mat-0,Mat-1,Mat-2,Mat-3}; generative methods include algorithms based on Expectation Maximization \cite{EM} and algorithms based on deep learning (e.g. denoising autoencoders (DAE) and generative adversarial nets (GAN)) \cite{DAE,autoencoder,GAN-imagecomplete}. However, current generative methods for imputation have various drawbacks. For instance, the approach for data imputation based on \cite{EM} makes assumptions about the underlying distribution and fails to generalize well when datasets contain mixed categorical and continuous variables. In contrast, the approaches based on DAE \cite{DAE} have been shown to work well in practice but require complete data during training. In many circumstances, missing values are part of the inherent structure of the problem so obtaining a complete dataset is impossible. Another approach with DAE \cite{autoencoder} allows for an incomplete dataset; however, it only utilizes the observed components to learn the representations of the data. \cite{GAN-imagecomplete} uses Deep Convolutional GANs for image completion; however, it also requires complete data for training the discriminator.

In this paper, we propose a novel imputation method, which we call Generative Adversarial Imputation Nets (GAIN), that generalizes the well-known GAN \cite{GAN} and is able to operate successfully even when complete data is unavailable. In GAIN, the generator's goal is to accurately impute missing data, and the discriminator's goal is to distinguish between observed and imputed components. The discriminator is trained to minimize the classification loss (when classifying which components were observed and which have been imputed), and the generator is trained to maximize the discriminator's misclassification rate. Thus, these two networks are trained using an adversarial process. To achieve this goal, GAIN builds on and adapts the standard GAN architecture. To ensure that the result of this adversarial process is the desired target, the GAIN architecture provides the discriminator with additional information in the form of ``hints". This hinting ensures that the generator generates samples according to the true underlying data distribution.

    \begin{figure}[t!]
        \centering
        \includegraphics[width=0.48\textwidth]{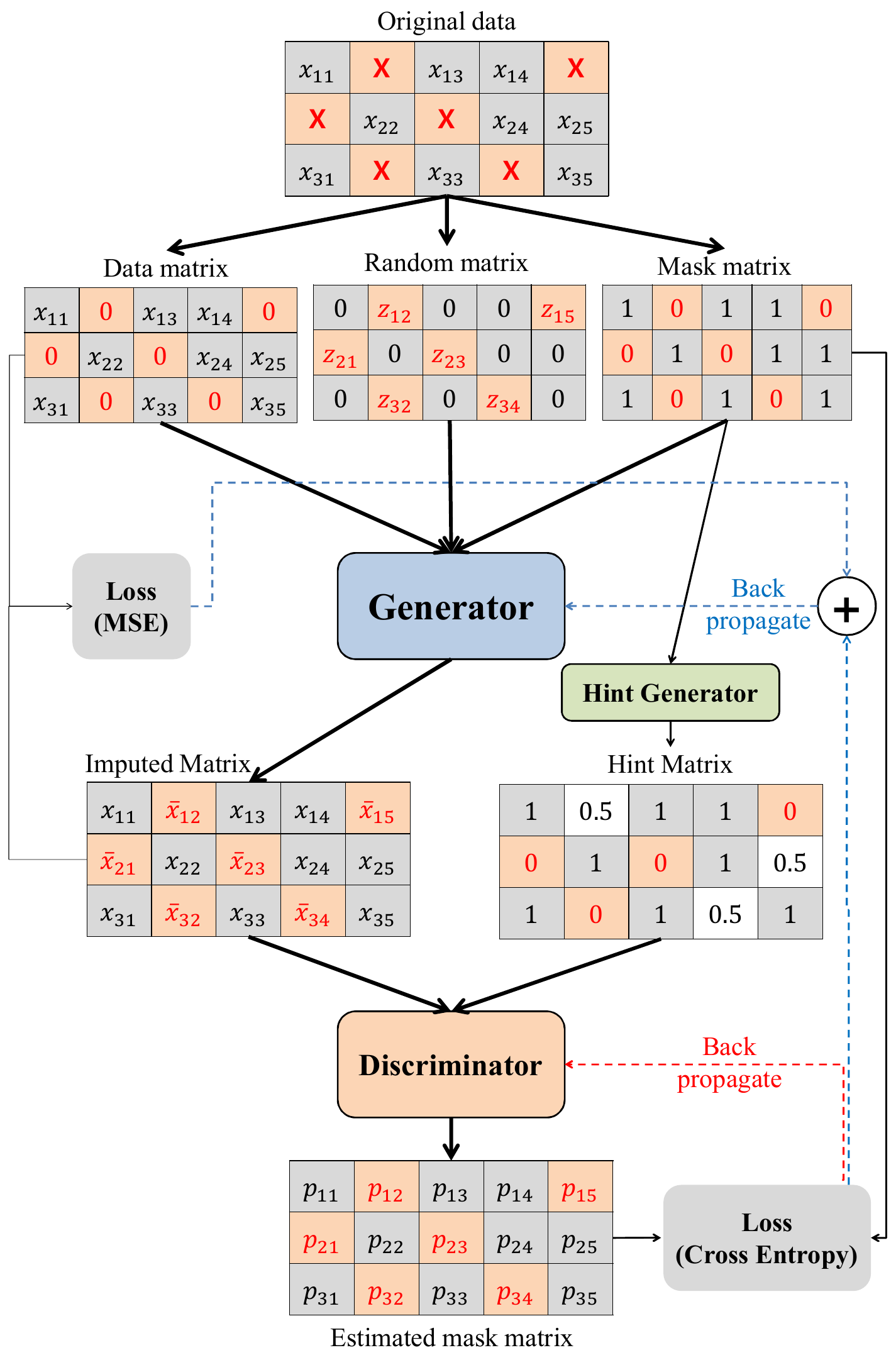}
        \caption{The architecture of GAIN}
        \label{fig:model}
    \end{figure}
    
\section{Problem Formulation} \label{sect:problem_formulate}
Consider a $d$-dimensional space $\mathcal{X} = \mathcal{X}_1 \times ... \times \mathcal{X}_d$. Suppose that $\mathbf{X} = (X_1, ..., X_d)$ is a random variable (either continuous or binary) taking values in $\mathcal{X}$, whose distribution we will denote $P(\mathbf{X})$. Suppose that $\mathbf{M} = (M_1, ..., M_d)$ is a random variable taking values in $\{0, 1\}^d$. We will call $\mathbf{X}$ the data vector, and $\mathbf{M}$ the mask vector.  

For each $i \in \{1, ..., d\}$, we define a new space $\tilde{\mathcal{X}_i} = \mathcal{X}_i \cup \{*\}$ where $*$ is simply a point not in any $\mathcal{X}_i$, representing an unobserved value. Let $\tilde{\mathcal{X}} = \tilde{\mathcal{X}}_1 \times ... \times \tilde{\mathcal{X}}_d$.  We define a new random variable $\tilde{\mathbf{X}} = (\tilde{X}_1, ..., \tilde{X}_d) \in \tilde{\mathcal{X}}$ in the following way:
\begin{equation} \label{eq:xtilde}
\tilde{X}_i = 
\begin{cases}
X_i, & \text{if } M_i = 1\\
*, & \text{otherwise}
\end{cases}
\end{equation}
so that $\mathbf{M}$ indicates which components of $\mathbf{X}$ are observed. Note that we can recover $\mathbf{M}$ from $\tilde{\mathbf{X}}$.

Throughout the remainder of the paper, we will often use lower-case letters to denote realizations of a random variable and use the notation $\mathbf{1}$ to denote a vector of $1$s, whose dimension will be clear from the context (most often, $d$).

\subsection{Imputation}
In the imputation setting, $n$ i.i.d. copies of $\tilde{\mathbf{X}}$ are realized, denoted $\tilde{\mathbf{x}}^1, ..., \tilde{\mathbf{x}}^n$ and we define the dataset $\mathcal{D} = \{(\tilde{\mathbf{x}}^i, \mathbf{m}^i)\}_{i=1}^n$, where $\mathbf{m}^i$ is simply the recovered realization of $\mathbf{M}$ corresponding to $\tilde{\mathbf{x}}^i$. 

Our goal is to {\em impute} the unobserved values in each $\tilde{\mathbf{x}}_i$. Formally, we want to generate samples according to $P(\mathbf{X} | \tilde{\mathbf{X}} = \tilde{\mathbf{x}}^i)$, the conditional distribution of $\mathbf{X}$ given $\tilde{\mathbf{X}} = \tilde{\mathbf{x}}^i$, for each $i$, to fill in the missing data points in $\mathcal{D}$. By attempting to model the {\em distribution} of the data rather than just the expectation, we are able to make multiple draws and therefore make {\em multiple imputations} allowing us to capture the uncertainty of the imputed values \cite{MICE, MICE-R, Rubin}.

\section{Generative Adversarial Imputation Nets}\label{sect:gain}
In this section we describe our approach for simulating $P(\mathbf{X} | \tilde{\mathbf{X}} = \tilde{\mathbf{x}}^i)$ which is motivated by GANs. We highlight key similarities and differences to a standard (conditional) GAN throughout. Fig. \ref{fig:model} depicts the overall architecture.
\subsection{Generator}
The generator, $G$, takes (realizations of) $\tilde{\mathbf{X}}$, $\mathbf{M}$ and a noise variable, $\mathbf{Z}$, as input and outputs $\bar{\mathbf{X}}$, a vector of imputations. Let $G: \tilde{\mathcal{X}} \times \{0, 1\}^d \times [0, 1]^d \to \mathcal{X}$ be a function, and $\mathbf{Z} = (Z_1, ..., Z_d)$ be d-dimensional noise (independent of all other variables).

Then we define the random variables $\bar{\mathbf{X}}, \hat{\mathbf{X}} \in \mathcal{X}$ by
\begin{align} \label{eq:xbar}
\bar{\mathbf{X}} &= G(\tilde{\mathbf{X}}, \mathbf{M}, (\mathbf{1 - M}) \odot \mathbf{Z}) \\ \label{eq:xhat}
\hat{\mathbf{X}} &= \mathbf{M} \odot \tilde{\mathbf{X}} + (\mathbf{1} - \mathbf{M}) \odot \bar{\mathbf{X}}
\end{align}
where $\odot$ denotes element-wise multiplication. $\bar{\mathbf{X}}$ corresponds to the vector of {\em imputed} values (note that $G$ outputs a value for every component, even if its value was observed) and $\hat{\mathbf{X}}$ corresponds to the completed data vector, that is, the vector obtained by taking the partial observation $\tilde{\mathbf{X}}$ and replacing each $*$ with the corresponding value of $\bar{\mathbf{X}}$.

This setup is very similar to a standard GAN, with $\mathbf{Z}$ being analogous to the noise variables introduced in that framework. Note, though, that in this framework, the target distribution, $P(\mathbf{X} | \tilde{\mathbf{X}})$, is essentially $||\mathbf{1 - M}||_1$-dimensional and so the noise we pass into the generator is $(\mathbf{1 - M}) \odot \mathbf{Z}$, rather than simply $\mathbf{Z}$, so that its dimension matches that of the targeted distribution.

\subsection{Discriminator}
As in the GAN framework, we introduce a discriminator, $D$, that will be used as an adversary to train $G$. However, unlike in a standard GAN where the output of the generator is either {\em completely} real or {\em completely} fake, in this setting the output is comprised of some components that are real and some that are fake. Rather than identifying that an entire vector is real or fake, the discriminator attempts to distinguish which {\em components} are real (observed) or fake (imputed) - this amounts to predicting the mask vector, $\mathbf{m}$. Note that the mask vector $\mathbf{M}$ is pre-determined by the dataset.

Formally, the discriminator is a function $D: \mathcal{X} \to [0, 1]^d$ with the $i$-th component of $D(\hat{\mathbf{x}})$ corresponding to the probability that the $i$-th component of $\hat{\mathbf{x}}$ was observed.

\subsection{Hint} \label{subsec:hint}
As will be seen in the theoretical results that follow, it is necessary to introduce what we call a hint mechanism. A hint mechanism is a random variable, $\mathbf{H}$, taking values in a space $\mathcal{H}$, both of which {\em we define}. We allow $\mathbf{H}$ to depend on $\mathbf{M}$ and for each (imputed) sample $(\hat{\mathbf{x}}, \mathbf{m})$, we draw $\mathbf{h}$ according to the distribution $\mathbf{H} | \mathbf{M} = \mathbf{m}$. We pass $\mathbf{h}$ as an additional input to the discriminator and so it becomes a function $D: \mathcal{X} \times \mathcal{H} \to [0, 1]^d$, where now the $i$-th component of $D(\hat{\mathbf{x}}, \mathbf{h})$ corresponds to the probability that the $i$-th component of $\hat{\mathbf{x}}$ was observed conditional on $\hat{\mathbf{X}} = \hat{\mathbf{x}}$ {\em and} $\mathbf{H} = \mathbf{h}$.

By defining $\mathbf{H}$ in different ways, we control the amount of information contained in $\mathbf{H}$ about $\mathbf{M}$ and in particular we show (in Proposition \ref{prop:nonunique}) that if we do not provide ``enough" information about $\mathbf{M}$ to $D$ (such as if we simply did not have a hinting mechanism), then there are several distributions that $G$ could reproduce that would all be optimal with respect to $D$.

\subsection{Objective}
We train $D$ to {\em maximize} the probability of correctly predicting $\mathbf{M}$. We train $G$ to {\em minimize} the probability of $D$ predicting $\mathbf{M}$. We define the quantity $V(D, G)$ to be

\begin{align} \label{eq:V}
V(D, G) &= \mathbb{E}_{\hat{\mathbf{X}}, \mathbf{M}, \mathbf{H}}\Big[\mathbf{M}^T \log D(\hat{\mathbf{X}}, \mathbf{H}) \\\nonumber
&\quad+ (\mathbf{1 - M})^T \log\big(\mathbf{1} - D(\hat{\mathbf{X}}, \mathbf{H})\big)\Big],
\end{align}
where $\log$ is element-wise logarithm and dependence on $G$ is through $\hat{\mathbf{X}}$.

Then, as with the standard GAN, we define the objective of GAIN to be the minimax problem given by
\begin{equation}\label{eq:obj}
\min_{G} \max_{D} V(D, G).
\end{equation}

We define the loss function $\mathcal{L} : \{0,1\}^d \times [0, 1]^d \to \mathbb{R}$ by
\begin{equation} \label{eq:loss}
\mathcal{L}(\mathbf{a},\mathbf{b})=\sum_{i=1}^{d}\Big[a_i\log(b_i)+(1-a_{i})\log(1-b_i)\Big].
\end{equation}
Writing $\hat{\mathbf{M}} = D(\hat{\mathbf{X}}, \mathbf{H})$, we can then rewrite (\ref{eq:obj}) as
\begin{equation} \label{eq:reform}
\min_G\max_D \mathbb{E}\big[\mathcal{L}(\mathbf{M}, \hat{\mathbf{M}})\big].
\end{equation}

\section{Theoretical Analysis} \label{sec:theory}
In this section we provide a theoretical analysis of (\ref{eq:obj}). Given a d-dimensional space $\mathcal{Z} = \mathcal{Z}_1 \times ... \times \mathcal{Z}_d$, a (probability) density\footnote{For ease of exposition, we use the term density even when referring to a probability mass function.} $p$ over $\mathcal{Z}$ corresponding to a random variable $Z$, and a vector $\mathbf{b} \in \{0, 1\}^d$ we define the set $A_{\mathbf{b}} = \{i : b_i = 1\}$, the projection $\phi_{\mathbf{b}} : \mathcal{Z} \to \Pi_{i \in A_{\mathbf{b}}} \mathcal{Z}_i$ by $\phi_{\mathbf{b}}(z) = (z_i)_{i \in A}$ and the density $p^{\mathbf{b}}$ to be the density of $\phi_{\mathbf{b}}(Z)$.

Throughout this section, we make the assumption that $\mathbf{M}$ is independent of $\mathbf{X}$, i.e. that the data is MCAR.

We will write $p(\mathbf{x}, \mathbf{m}, \mathbf{h})$ to denote the density of the random variable $(\hat{\mathbf{X}}, \mathbf{M}, \mathbf{H})$ and we will write $\hat{p}$, $p_m$ and $p_h$ to denote the marginal densities (of $p$) corresponding to $\hat{\mathbf{X}}$, $\mathbf{M}$ and $\mathbf{H}$, respectively. When referring to the joint density of two of the three variables (potentially conditioned on the third), we will simply use $p$, abusing notation slightly.

It is more intuitive to think of this density through its decomposition into densities corresponding to the true data generating process, and to the generator defined by (\ref{eq:xbar}),
\begin{align} \label{eq:decomp}
p(\mathbf{x}, \mathbf{m}, \mathbf{h}) =& p_m(\mathbf{m})\hat{p}^{\mathbf{m}}(\phi_{\mathbf{m}}(\mathbf{x} | \mathbf{m}))\\\nonumber
&\times \hat{p}^{\mathbf{1 - m}}(\phi_{\mathbf{1 - m}}(\mathbf{x}) | \mathbf{m}, \phi_{\mathbf{m}}(\mathbf{x}))p_h(\mathbf{h} | \mathbf{m}).
\end{align}
The first two terms in (\ref{eq:decomp}) are both defined by the data, where $\hat{p}^{\mathbf{m}}(\phi_{\mathbf{m}}(\mathbf{x}) | \mathbf{m})$ is the density of $\phi_{\mathbf{m}}(\hat{\mathbf{X}}) | \mathbf{M} = \mathbf{m}$ which corresponds to the density of $\phi_{\mathbf{m}}(\mathbf{X})$ (i.e. the true data distribution), since conditional on $\mathbf{M} = \mathbf{m}$, $\phi_{\mathbf{m}}(\hat{\mathbf{X}}) = \phi_{\mathbf{m}}(\mathbf{X})$ (see equations \ref{eq:xtilde} and \ref{eq:xhat}). The third term, $\hat{p}^{\mathbf{1 - m}}(\phi_{\mathbf{1 - m}}(\mathbf{x}) | \mathbf{m}, \phi_{\mathbf{m}}(\mathbf{x}))$, is determined by the generator, $G$, and is the density of the random variable $\phi_{\mathbf{1 - m}}(G(\tilde{\mathbf{x}}, \mathbf{m}, \mathbf{Z})) = \phi_{\mathbf{1 - m}}(\bar{\mathbf{X}}) | \tilde{\mathbf{X}} = \tilde{\mathbf{x}}, \mathbf{M} = \mathbf{m}$ where $\tilde{\mathbf{x}}$ is determined by $\mathbf{m}$ and $\phi_{\mathbf{m}}(\mathbf{x})$. The final term is the conditional density of the hint, which we are free to define (its selection will be motivated by the following analysis).

Using this decomposition, one can think of drawing a sample from $\hat{p}$ as first sampling $\mathbf{m}$ according to $p_m(\cdot)$, then sampling the ``observed" components, $\mathbf{x}_{obs}$, according to $\hat{p}^{\mathbf{m}}(\cdot)$ (we can then construct $\tilde{\mathbf{x}}$ from $\mathbf{x}_{obs}$ and $\mathbf{m}$), then {\em generating} the imputed values, $\mathbf{x}_{imp}$, from the generator according to $\hat{p}^{\mathbf{1 - m}}(\cdot | \mathbf{m}, \mathbf{x}_{obs})$ and finally sampling the hint according to $p_h(\cdot |\mathbf{m})$.

\begin{lemma} \label{lem:D}
	Let $\mathbf{x} \in \mathcal{X}$. Let $p_h$ be a fixed density over the hint space $\mathcal{H}$ and let $\mathbf{h} \in \mathcal{H}$ be such that $p(\mathbf{x}, \mathbf{h}) > 0$.	Then for a fixed generator, $G$, the $i$-th component of the optimal discriminator, $D^*(\mathbf{x}, \mathbf{h})$ is given by
	\begin{align} \label{eq:optdis}
	D^*(\mathbf{x}, \mathbf{h})_i &= \frac{p(\mathbf{x}, \mathbf{h}, m_i = 1)}{p(\mathbf{x}, \mathbf{h}, m_i = 1) + p(\mathbf{x}, \mathbf{h}, m_i = 0)} \\
	&= p_m(m_i = 1 | \mathbf{x}, \mathbf{h})
	\end{align}
	for each $i \in \{1, ..., d\}$.
\end{lemma}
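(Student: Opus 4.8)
The plan is to mimic the classical argument for the optimal discriminator of a standard GAN, adapting it to this component-wise, mask-prediction setting. The key structural observation is that the objective $V(D,G)$ is a sum over the $d$ coordinates, and the value $D(\mathbf{x},\mathbf{h})_i$ enters only the $i$-th summand; moreover, for each fixed $(\mathbf{x},\mathbf{h})$ the optimization over the scalar $D(\mathbf{x},\mathbf{h})_i \in [0,1]$ is completely decoupled from the optimization at any other point or coordinate. This reduces the entire problem to a pointwise, one-dimensional maximization.

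Concretely, first I would write $V(D,G)$ explicitly as a sum over $\mathbf{m}$ and an integral against the joint density $p(\mathbf{x},\mathbf{m},\mathbf{h})$, expanding the inner products $\mathbf{M}^T \log D(\hat{\mathbf{X}},\mathbf{H})$ and $(\mathbf{1-M})^T \log(\mathbf{1} - D(\hat{\mathbf{X}},\mathbf{H}))$ as $\sum_i$. By linearity I interchange the finite sum over $i$ with the expectation, so that $V = \sum_i V_i$ where $V_i$ depends on $D$ only through the single component $D(\cdot,\cdot)_i$. For each $i$ I then split the marginalization over $\mathbf{m}$ according to whether $m_i = 1$ or $m_i = 0$; this collects the two marginal densities $p(\mathbf{x},\mathbf{h},m_i=1)$ and $p(\mathbf{x},\mathbf{h},m_i=0)$ as the coefficients multiplying $\log D(\mathbf{x},\mathbf{h})_i$ and $\log(1 - D(\mathbf{x},\mathbf{h})_i)$, respectively.

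At this stage each $V_i$ has the form $\int\!\!\int \big[\,a(\mathbf{x},\mathbf{h})\log t + b(\mathbf{x},\mathbf{h})\log(1-t)\,\big]\,d\mathbf{x}\,d\mathbf{h}$ with $t = D(\mathbf{x},\mathbf{h})_i$, $a = p(\mathbf{x},\mathbf{h},m_i=1)$ and $b = p(\mathbf{x},\mathbf{h},m_i=0)$. Because $t$ may be chosen freely and independently at each $(\mathbf{x},\mathbf{h})$, maximizing the integral amounts to maximizing the integrand pointwise. I would then invoke the elementary fact that, for $a,b > 0$, the map $t \mapsto a\log t + b\log(1-t)$ on $(0,1)$ attains its unique maximum at $a/(a+b)$. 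Applying this at each point yields $D^*(\mathbf{x},\mathbf{h})_i = a/(a+b)$, and since $a + b = p(\mathbf{x},\mathbf{h},m_i=1) + p(\mathbf{x},\mathbf{h},m_i=0) = p(\mathbf{x},\mathbf{h})$ by marginalizing out $m_i$, this is exactly the first displayed expression; dividing numerator and denominator by $p(\mathbf{x},\mathbf{h})$ identifies it with the conditional probability $p_m(m_i = 1 \mid \mathbf{x},\mathbf{h})$.

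The main obstacle is making the pointwise decoupling rigorous: the hypothesis $p(\mathbf{x},\mathbf{h}) > 0$ is needed precisely to guarantee $a + b > 0$ so that the maximizer is well-defined, and the optimal $D^*$ is then only pinned down on the support of $p$, i.e. almost everywhere, with its values off-support irrelevant to $V$. I would also remark on the boundary cases $a = 0$ or $b = 0$, where the optimum is pushed to $t = 0$ or $t = 1$ but the formula $a/(a+b)$ still returns the correct value, so no separate treatment is required.
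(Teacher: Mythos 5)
Your proposal is correct and follows essentially the same route as the paper's proof in the Supplementary Materials: expand $V(D,G)$ as an integral against $p(\mathbf{x},\mathbf{m},\mathbf{h})$, decouple across coordinates and points, split the sum over $\mathbf{m}$ by the value of $m_i$, and apply the elementary fact that $t \mapsto a\log t + b\log(1-t)$ is maximized at $t = a/(a+b)$, with the second equality following from $a+b = p(\mathbf{x},\mathbf{h})$ and Bayes' rule. Your explicit handling of the boundary cases $a=0$ or $b=0$ and the remark that $D^*$ is only determined on the support of $p$ are minor refinements beyond what the paper states, but the argument is the same standard GAN-style pointwise optimization.
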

\begin{proof}
	All proofs are provided in Supplementary Materials.
\end{proof}

We now rewrite (\ref{eq:V}), substituting for $D^*$, to obtain the following minimization criterion for $G$:
\begin{align}
C(G) = &\mathbb{E}_{\hat{\mathbf{X}}, \mathbf{M}, \mathbf{H}}\Big(\underset{i: M_i = 1}{\sum}\log p_m(m_i = 1 | \hat{\mathbf{X}}, \mathbf{H})\\\nonumber
&+ \underset{i: M_i = 0}{\sum} \log p_m(m_i = 0 | \hat{\mathbf{X}}, \mathbf{H})\Big),
\end{align}
where dependence on $G$ is through $p_m(\cdot | \hat{\mathbf{X}})$.

\begin{theorem} \label{thm:main}
 	A global minimum for $C(G)$ is achieved if and only if the density $\hat{p}$ satisfies
	\begin{equation} \label{eq:phat}
	\hat{p}(\mathbf{x} | \mathbf{h}, m_i = t) = \hat{p}(\mathbf{x} | \mathbf{h})
	\end{equation}
	for each $i \in \{1, ..., d\}$, $\mathbf{x} \in \mathcal{X}$ and $\mathbf{h} \in \mathcal{H}$ such that $p_h(\mathbf{h} | m_i = t) > 0$.
\end{theorem}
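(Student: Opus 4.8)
The plan is to collapse $C(G)$ into a sum of conditional binary entropies of the mask, and then minimize that expression by Jensen's inequality against a marginal constraint that the generator is powerless to change. First I would fix an index $i$ and evaluate the expectation in $C(G)$ in two stages, integrating out $M_i$ conditional on $(\hat{\mathbf{X}}, \mathbf{H})$ before integrating over the latter. Writing $q_i(\mathbf{x}, \mathbf{h}) = p_m(m_i = 1 \mid \mathbf{x}, \mathbf{h})$ and using $\mathbb{E}[M_i \mid \hat{\mathbf{X}}, \mathbf{H}] = q_i$, the two index sums $\sum_{i : M_i = 1}$ and $\sum_{i : M_i = 0}$ combine, per coordinate, into $q_i \log q_i + (1 - q_i)\log(1 - q_i)$. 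Summing over $i$ gives
\[
C(G) = -\sum_{i=1}^d \mathbb{E}_{\hat{\mathbf{X}}, \mathbf{H}}\big[H_b(q_i(\hat{\mathbf{X}}, \mathbf{H}))\big],
\]
where $H_b(q) = -q\log q - (1-q)\log(1-q)$ is the (strictly concave) binary entropy, so that minimizing $C(G)$ is equivalent to \emph{maximizing} each $\mathbb{E}[H_b(q_i)]$.

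The second step is to pin down the constraint that makes the problem nontrivial and rules out the naive answer $q_i \equiv \tfrac12$. Although $G$ can reshape $q_i(\cdot, \mathbf{h})$ pointwise, it cannot alter the $\hat{p}(\cdot \mid \mathbf{h})$-average: by the law of total probability $\mathbb{E}_{\hat{\mathbf{X}} \mid \mathbf{H} = \mathbf{h}}[q_i] = p_m(m_i = 1 \mid \mathbf{h})$, and this marginal is fixed by $p_m$ and the chosen hint mechanism $p_h(\cdot \mid \mathbf{m})$ alone, with no dependence on $G$. Jensen's inequality for each fixed $\mathbf{h}$ then yields $\mathbb{E}_{\hat{\mathbf{X}} \mid \mathbf{H} = \mathbf{h}}[H_b(q_i)] \le H_b(p_m(m_i = 1 \mid \mathbf{h}))$, hence $C(G) \ge -\sum_i \mathbb{E}_{\mathbf{H}}[H_b(p_m(m_i = 1 \mid \mathbf{H}))]$, a constant independent of $G$. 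By strict concavity, equality holds if and only if $q_i(\mathbf{x}, \mathbf{h})$ is constant in $\mathbf{x}$ on the support of $\hat{p}(\cdot \mid \mathbf{h})$, i.e. $p_m(m_i = 1 \mid \mathbf{x}, \mathbf{h}) = p_m(m_i = 1 \mid \mathbf{h})$; feeding this through Bayes' rule converts it into the stated condition $\hat{p}(\mathbf{x} \mid \mathbf{h}, m_i = t) = \hat{p}(\mathbf{x} \mid \mathbf{h})$, with the $t = 0$ case following from $t = 1$ via the mixture identity for $\hat{p}(\mathbf{x} \mid \mathbf{h})$. Both directions of the ``if and only if'' fall out of this single equality characterization.

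The last step, which I expect to be the main obstacle, is verifying that the lower bound is actually attained, so that the characterization is not vacuous. Here I would exhibit the generator that draws the missing coordinates from the true conditional $P(\phi_{\mathbf{1 - m}}(\mathbf{X}) \mid \phi_{\mathbf{m}}(\mathbf{X}))$: under the MCAR assumption $\mathbf{M} \perp \mathbf{X}$ this makes $\hat{\mathbf{X}}$ equal in distribution to $\mathbf{X}$ conditional on every $\mathbf{m}$, hence $\hat{\mathbf{X}} \perp \mathbf{M}$; since $\mathbf{H}$ is a randomized function of $\mathbf{M}$ alone, $\hat{\mathbf{X}} \perp (\mathbf{M}, \mathbf{H})$, and in particular $\hat{\mathbf{X}} \perp M_i \mid \mathbf{H}$, which is exactly (\ref{eq:phat}). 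The delicate points to get right are: the degenerate hints where $p_m(m_i = 1 \mid \mathbf{h}) \in \{0, 1\}$, where $H_b = 0$ and the condition is automatic; the almost-everywhere qualifier in the Jensen equality case, which is why the statement is restricted to $\mathbf{h}$ with $p_h(\mathbf{h} \mid m_i = t) > 0$; and the MCAR step promoting per-$\mathbf{m}$ equality in distribution to genuine independence. The remaining manipulations are routine applications of the decomposition (\ref{eq:decomp}).
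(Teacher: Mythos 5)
Your proof is correct, and it reaches the theorem by a genuinely different (though ultimately equivalent) route from the paper's. The paper follows Goodfellow et al.\ almost verbatim: after substituting the optimal discriminator it keeps the joint density in play, writes $C(G)$ as $\sum_{i}\int_{\mathcal{X}}\sum_{\mathbf{h}}\sum_{t}p(\mathbf{x},\mathbf{h},m_i=t)\log p_m(m_i=t\mid\mathbf{x},\mathbf{h})\,d\mathbf{x}$, factors the log-posterior via Bayes' rule as $\log\frac{\hat{p}(\mathbf{x}\mid\mathbf{h},m_i=t)}{\hat{p}(\mathbf{x}\mid\mathbf{h})}+\log p_m(m_i=t\mid\mathbf{h})$, and thereby exhibits $C(G)$ as a $G$-independent constant plus a sum of Kullback--Leibler divergences $\mathrm{KL}\bigl(\hat{p}(\cdot\mid\mathbf{h},m_i=t)\,\|\,\hat{p}(\cdot\mid\mathbf{h})\bigr)$, concluding by nonnegativity of KL with equality iff the densities agree. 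You instead integrate out $M_i$ first, collapse $C(G)$ to $-\sum_i\mathbb{E}[H_b(q_i)]$, and run Jensen's inequality against the constraint that the $\hat{p}(\cdot\mid\mathbf{h})$-average of the posterior $q_i$ is pinned at $p_m(m_i=1\mid\mathbf{h})$. These are the same inequality in different clothes: your Jensen gap is exactly the conditional mutual information $I(\hat{\mathbf{X}};M_i\mid\mathbf{H})$, which equals the paper's $p_m(\cdot\mid\mathbf{h})$-weighted KL sum (a generalized Jensen--Shannon divergence), so the constants and the equality cases coincide, and your Bayes'-rule translation plus the mixture identity for the $t=0$ case is the same bookkeeping the paper does in the other direction. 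Your packaging buys two things the paper's proof does not make explicit: it isolates \emph{why} the naive optimum $q_i\equiv\tfrac12$ is infeasible (the marginal $p_m(m_i=1\mid\mathbf{h})$ is fixed by the data and hint mechanism, beyond the generator's reach), and it supplies an explicit attainment step --- the generator sampling the missing coordinates from the true conditional, which under MCAR yields $\hat{\mathbf{X}}\perp(\mathbf{M},\mathbf{H})$ --- so that ``global minimum is achieved'' is not vacuous; the paper leaves attainability implicit in its infinite-capacity convention. Conversely, the paper's version buys a direct quantitative divergence measuring a given generator's suboptimality, in exact structural parallel with the original GAN theorem. Your handling of the edge cases (degenerate hints with $p_m(m_i=1\mid\mathbf{h})\in\{0,1\}$, the almost-everywhere qualifier in the strict-concavity equality case, and the restriction to $\mathbf{h}$ with $p_h(\mathbf{h}\mid m_i=t)>0$) is consistent with the theorem's hypotheses and is, if anything, more careful than the paper's treatment.
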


The following proposition asserts that if $\mathbf{H}$ does not contain ``enough" information about $\mathbf{M}$, we cannot guarantee that $G$ learns the desired distribution (the one uniquely defined by the (underlying) data).

\begin{proposition} \label{prop:nonunique}
	There exist distributions of $\mathbf{X}$, $\mathbf{M}$ and $\mathbf{H}$ for which solutions to (\ref{eq:phat}) are not unique. In fact, if $\mathbf{H}$ is independent of $\mathbf{M}$, then (\ref{eq:phat}) does not define a unique density, in general.
\end{proposition}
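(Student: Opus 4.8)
The plan is to prove the proposition by exhibiting an explicit counterexample, since the statement is existential. First I would reduce condition (\ref{eq:phat}) to something more transparent under the hypothesis that $\mathbf{H}$ is independent of $\mathbf{M}$. Because the hint is drawn from the conditional $\mathbf{H} \mid \mathbf{M} = \mathbf{m}$ and therefore depends on the remaining variables only through $\mathbf{M}$, independence of $\mathbf{H}$ and $\mathbf{M}$ forces $\mathbf{H}$ to be independent of the pair $(\hat{\mathbf{X}}, \mathbf{M})$ entirely (one may, for instance, take $\mathbf{H}$ degenerate). Conditioning on $\mathbf{h}$ is then vacuous, and (\ref{eq:phat}) collapses to the requirement that
\[
\hat{p}(\mathbf{x} \mid m_i = t) = \hat{p}(\mathbf{x})
\]
for every $i$, every $t \in \{0,1\}$ and every $\mathbf{x}$; that is, $\hat{\mathbf{X}}$ must be independent of each coordinate $M_i$ taken separately. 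The crucial point I would emphasize is that this is strictly weaker than forcing $\hat{\mathbf{X}}$ to have the true data law, because it constrains only the one-dimensional marginals of $\hat{\mathbf{X}}$ together with their individual independence from $\mathbf{M}$, leaving the dependence structure among coordinates free.

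Next I would build the counterexample in dimension $d = 2$ with binary coordinates. Take $\mathbf{X}$ uniform on $\{(0,0), (1,1)\}$, so that $X_1$ and $X_2$ are perfectly correlated while each is marginally $\text{Bernoulli}(1/2)$, and take $\mathbf{M}$ (independent of $\mathbf{X}$, as MCAR requires) uniform on the two patterns $\{(1,0), (0,1)\}$, so that in every draw exactly one coordinate is observed. I would then write down two generators. The \emph{correct} generator imputes the missing coordinate to equal the observed one, producing $\hat{\mathbf{X}}$ uniform on $\{(0,0),(1,1)\}$ irrespective of $\mathbf{M}$. The \emph{alternative} generator imputes the complement of the observed coordinate, producing $\hat{\mathbf{X}}$ uniform on $\{(0,1),(1,0)\}$ irrespective of $\mathbf{M}$. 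Both are legitimate generators in the sense of (\ref{eq:xbar})--(\ref{eq:xhat}): each leaves the observed component untouched and only fills in the missing one.

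Then I would verify that both densities satisfy the reduced condition. Since in each case the conditional law $\hat{\mathbf{X}} \mid \mathbf{M}$ does not depend on $\mathbf{M}$, we have $\hat{\mathbf{X}}$ independent of $\mathbf{M}$ jointly, hence independent of each $M_i$, so (\ref{eq:phat}) holds; by Theorem \ref{thm:main} both are global minima of $C(G)$. Yet the two resulting densities $\hat{p}$ are supported on disjoint sets (the diagonal versus the anti-diagonal of $\{0,1\}^2$) and are therefore distinct, which establishes non-uniqueness and in particular proves the first sentence of the proposition. The general assertion then follows from the reduction in the first step: whenever $\mathbf{H}$ is independent of $\mathbf{M}$, condition (\ref{eq:phat}) pins down only the marginals of $\hat{\mathbf{X}}$ and their separate independence from the $M_i$, so any two generators matching those marginals but differing in their imposed coordinate dependence are both solutions.

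I expect the main obstacle to be the reduction step rather than the construction: one must argue carefully that independence of $\mathbf{H}$ and $\mathbf{M}$ really does make conditioning on $\mathbf{h}$ inert in (\ref{eq:phat}) --- this uses the structural assumption that the hint is generated from $\mathbf{M}$ alone --- and one must check that the alternative generator genuinely respects the observed entries, so that it is admissible and its observed marginals still agree with the true data (which they must, since observed components are copied verbatim). Once these two points are nailed down, the disjoint-support observation makes distinctness immediate.
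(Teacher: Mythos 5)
Your proposal is correct and takes essentially the same route as the paper's own proof: independence of $\mathbf{H}$ and $\mathbf{M}$ (together with the fact that the hint is drawn from $\mathbf{M}$ alone) collapses (\ref{eq:phat}) to $\hat{p}(\mathbf{x} \mid m_i = t) = \hat{p}(\mathbf{x})$, after which the paper exhibits the same $d=2$ counterexample with $X_1 = X_2 \sim \mathrm{Bernoulli}(1/2)$, $\mathbf{M}$ uniform on $\{(1,0),(0,1)\}$, and the two admissible generators that copy or flip the observed bit, producing distinct imputed densities supported on the diagonal and anti-diagonal of $\{0,1\}^2$. Your reduction step and your check that the flip generator respects the observed components mirror the paper's argument, so there is nothing to add.
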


Let the random variable $\mathbf{B} = (B_1, ..., B_d) \in \{0, 1\}^d$ be defined by first sampling $k$ from $\{1, ..., d\}$ uniformly at random and then setting 
\begin{equation} \label{eq:bdef}
B_j = \begin{cases}
1 \text{ if } j \neq k \\
0 \text{ if } j = k.
\end{cases}
\end{equation}
Let $\mathcal{H} = \{0, 0.5, 1\}^d$ and, given $\mathbf{M}$, define
\begin{equation} \label{eq:hdef}
\mathbf{H} = \mathbf{B} \odot \mathbf{M} + 0.5(\mathbf{1 - B}).
\end{equation}
Observe first that $\mathbf{H}$ is such that $H_i = t \implies M_i = t$ for $t \in \{0, 1\}$ but that $H_i = 0.5$ implies nothing about $M_i$. In other words, $\mathbf{H}$ reveals all but one of the components of $\mathbf{M}$ to $D$. Note, however, that $\mathbf{H}$ does contain some information about $M_i$ since $M_i$ is not assumed to be independent of the other components of $\mathbf{M}$.

The following lemma confirms that the discriminator behaves as we expect with respect to this hint mechanism.

\begin{lemma} \label{lem:DH}
	Suppose $\mathbf{H}$ is defined as above. Then for $\mathbf{h}$ such that $h_i = 0$ we have $D^*(\mathbf{x}, \mathbf{h})_i = 0$ and for $\mathbf{h}$ such that $h_i = 1$ we have $D^*(\mathbf{x}, \mathbf{h})_i = 1$, for all $\mathbf{x} \in \mathcal{X}$, $i \in \{1, ..., d\}$.
\end{lemma}

The final proposition we state tells us that $\mathbf{H}$ as specified above ensures the generator learns to replicate the desired distribution.

\begin{proposition}	\label{prop:h}
	Suppose $\mathbf{H}$ is defined as above. Then the solution to (\ref{eq:phat}) is unique and satisfies
	\begin{equation}
	\hat{p}(\mathbf{x} | \mathbf{m}_1) = \hat{p}(\mathbf{x} | \mathbf{m}_2)
	\end{equation}
	for all $\mathbf{m}_1, \mathbf{m}_2 \in \{0, 1\}^d$. In particular, $\hat{p}(\mathbf{x} | \mathbf{m}) = \hat{p}(\mathbf{x} | \mathbf{1})$ and since $\mathbf{M}$ is independent of $\mathbf{X}$, $\hat{p}(\mathbf{x} | \mathbf{1})$ is the density of $\mathbf{X}$. The distribution of $\hat{\mathbf{X}}$ is therefore the same as the distribution of $\mathbf{X}$.
\end{proposition}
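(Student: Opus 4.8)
The plan is to lean on Theorem~\ref{thm:main}, which characterizes the global minimum of $C(G)$ by the condition $\hat{p}(\mathbf{x} | \mathbf{h}, m_i = t) = \hat{p}(\mathbf{x} | \mathbf{h})$ holding for every coordinate $i$, every admissible hint $\mathbf{h}$, and both $t \in \{0,1\}$, and to translate this statement \emph{about the hint} into a statement \emph{about the mask} using the particular structure of $\mathbf{H}$ in (\ref{eq:hdef}). The guiding idea is that hints carrying a $0.5$ in coordinate $i$ encode exactly the event ``every bit of $\mathbf{m}$ is revealed except the $i$-th,'' so conditioning on such a hint is essentially the same as conditioning on a full mask in which only the $i$-th bit is free.

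First I would record, from (\ref{eq:bdef}) and (\ref{eq:hdef}), that $H_i = 0.5$ happens precisely when the uniformly chosen index $k$ equals $i$, in which case $h_j = m_j$ for all $j \neq i$ while $m_i$ is left undetermined. Hence fixing $\mathbf{h}$ with $h_i = 0.5$ and additionally conditioning on $m_i = t$ pins down the entire mask to a single vector $\mathbf{m}^{(t)}$, where $m^{(t)}_j = h_j$ for $j \neq i$ and $m^{(t)}_i = t$. The second ingredient is the conditional independence $\hat{\mathbf{X}} \perp \mathbf{H} \mid \mathbf{M}$: by (\ref{eq:xhat}) the variable $\hat{\mathbf{X}}$ is a function of $\mathbf{X}, \mathbf{M}, \mathbf{Z}$, whereas $\mathbf{H}$ is a function of $\mathbf{M}$ and the independent selector $\mathbf{B}$, so conditioning further on the selector event $\{k=i\}$ adds nothing. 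Combining these observations gives $\hat{p}(\mathbf{x} | \mathbf{h}, m_i = t) = \hat{p}(\mathbf{x} | \mathbf{m}^{(t)})$, and the equality from Theorem~\ref{thm:main} then reads $\hat{p}(\mathbf{x} | \mathbf{m}^{(0)}) = \hat{p}(\mathbf{x} | \mathbf{m}^{(1)})$; that is, $\hat{p}(\mathbf{x} | \mathbf{m})$ is invariant under flipping any single bit of $\mathbf{m}$.

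Next I would invoke connectivity of the Boolean hypercube $\{0,1\}^d$: any two masks $\mathbf{m}_1, \mathbf{m}_2$ are joined by a path of single-bit flips, so iterating the previous equality along such a path yields $\hat{p}(\mathbf{x} | \mathbf{m}_1) = \hat{p}(\mathbf{x} | \mathbf{m}_2)$ for all masks, which is the asserted invariance. Specializing to $\mathbf{m} = \mathbf{1}$ and using (\ref{eq:xtilde})--(\ref{eq:xhat}) together with the MCAR assumption ($\mathbf{M} \perp \mathbf{X}$) shows $\hat{p}(\mathbf{x} | \mathbf{1})$ equals the true density of $\mathbf{X}$, since conditional on $\mathbf{M} = \mathbf{1}$ one has $\hat{\mathbf{X}} = \mathbf{X}$. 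The marginal of $\hat{\mathbf{X}}$ is then $\sum_{\mathbf{m}} p_m(\mathbf{m}) \hat{p}(\mathbf{x} | \mathbf{m}) = \hat{p}(\mathbf{x} | \mathbf{1})$, the density of $\mathbf{X}$, which is the final distributional equality. For uniqueness I would use the decomposition (\ref{eq:decomp}): the first two factors of $\hat{p}(\mathbf{x}|\mathbf{m})$ are fixed by the data, and forcing the product to equal the true density $P(\mathbf{X})$ pins the remaining generator-controlled factor $\hat{p}^{\mathbf{1-m}}(\cdot \mid \mathbf{m}, \phi_{\mathbf{m}}(\mathbf{x}))$ down to the true conditional of $\phi_{\mathbf{1-m}}(\mathbf{X})$ given $\phi_{\mathbf{m}}(\mathbf{X})$, leaving no freedom.

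I expect the main obstacle to be the bookkeeping around positivity of the conditioning events. Theorem~\ref{thm:main} only constrains $\hat{p}$ on hints with $p_h(\mathbf{h} | m_i = t) > 0$, so in order to flip the $i$-th bit freely I need both $\mathbf{m}^{(0)}$ and $\mathbf{m}^{(1)}$, and the associated hints, to carry positive probability; this is where the argument relies on every mask having positive mass under $p_m$ (guaranteed, for example, when each coordinate is independently missing with probability strictly between $0$ and $1$). Without such a guarantee the hypercube-connectivity step can fail, and the invariance would hold only on the subgraph of reachable masks. The other delicate point is verifying the reduction $\hat{p}(\mathbf{x} | \mathbf{h}, m_i = t) = \hat{p}(\mathbf{x} | \mathbf{m}^{(t)})$ rigorously via the conditional independence above, rather than merely asserting it.
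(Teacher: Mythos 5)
Your proposal is correct and follows essentially the same route as the paper's own proof: fixing a hint with $h_i = 0.5$ so that conditioning on $\{\mathbf{H} = \mathbf{h}, M_i = t\}$ is equivalent to conditioning on a fully determined mask $\mathbf{m}^{(t)}$, invoking Theorem~\ref{thm:main} to get invariance of $\hat{p}(\mathbf{x} \mid \mathbf{m})$ under single-bit flips, chaining flips across the hypercube, and then specializing to $\mathbf{m} = \mathbf{1}$ with the MCAR assumption to identify $\hat{p}(\mathbf{x} \mid \mathbf{1})$ with the density of $\mathbf{X}$. Your explicit attention to the positivity requirement $p_h(\mathbf{h} \mid m_i = t) > 0$ (equivalently, every mask carrying positive mass under $p_m$) is a genuine hypothesis the paper uses implicitly, so flagging it is a point in your favor rather than a deviation.
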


For the remainder of the paper, $\mathbf{B}$ and $\mathbf{H}$ will be defined as in equations (\ref{eq:bdef}) and (\ref{eq:hdef}).

\section{GAIN Algorithm}\label{sect:gain_algorithm}

Using an approach similar to that in \cite{GAN}, we solve the minimax optimization problem (\ref{eq:obj}) in an iterative manner. Both $G$ and $D$ are modeled as fully connected neural nets.

We first optimize the discriminator $D$ with a fixed generator $G$ using mini-batches of size $k_{D}$\footnote{Details of hyper-parameter selection can be found in the Supplementary Materials.}. For each sample in the mini-batch, $(\tilde{\mathbf{x}}(j), \mathbf{m}(j))$\footnote{The index $j$ now corresponds to the $j$-th sample of the mini-batch, rather than the $j$-th sample of the entire dataset.}, we draw $k_D$ independent samples, $\mathbf{z}(j)$ and $\mathbf{b}(j)$, of $\mathbf{Z}$ and $\mathbf{B}$ and compute $\hat{\mathbf{x}}(j)$ and $\mathbf{h}(j)$ accordingly. Lemma \ref{lem:DH} then tells us that the only outputs of $D$ that depend on $G$ are the ones corresponding to $b_i = 0$ for each sample. We therefore only train $D$ to give us these outputs (if we also trained $D$ to match the outputs specified in Lemma \ref{lem:DH} we would gain no information about $G$, but $D$ would overfit to the hint vector). We define $\mathcal{L}_D : \{0, 1\}^d \times [0, 1]^d \times \{0, 1\}^d \to \mathbb{R}$ by
\begin{align}
\mathcal{L}_D(\mathbf{m}, \hat{\mathbf{m}}, \mathbf{b}) = \sum_{i : b_i = 0} \Big[&m_i\log(\hat{m}_{i})\\\nonumber
&+ (1-m_{i})\log(1-\hat{m}_{i})\Big].
\end{align}
$D$ is then trained according to
\begin{equation} \label{eq:dloss2}
\min_D -\sum_{j=1}^{k_D} \mathcal{L}_D(\mathbf{m}(j), \hat{\mathbf{m}}(j), \mathbf{b}(j))
\end{equation}
recalling that $\hat{\mathbf{m}}(j) = D(\hat{\mathbf{x}}(j), \mathbf{m}(j))$.

Second, we optimize the generator $G$ using the newly updated discriminator $D$ with mini-batches of size $k_{G}$. We first note that $G$ in fact outputs a value for the {\em entire} data vector (including values for the components we observed). Therefore, in training $G$, we not only ensure that the imputed values for missing components ($m_j = 0$) successfully fool the discriminator (as defined by the minimax game), we also ensure that the values outputted by $G$ for observed components ($m_j = 1$) are close to those actually observed. This is justified by noting that the conditional distribution of $\mathbf{X}$ given $\tilde{\mathbf{X}} = \tilde{\mathbf{x}}$ obviously fixes the components of $\mathbf{X}$ corresponding to $M_i = 1$ to be $\tilde{X}_i$. This also ensures that the representations learned in the hidden layers of $\tilde{\mathbf{X}}$ suitably capture the information contained in $\tilde{\mathbf{X}}$ (as in an auto-encoder).

\begin{algorithm}[h!]
	\caption{Pseudo-code of GAIN}\label{alg:pseudo}
	\begin{algorithmic} 
		\WHILE {training loss has not converged}
		\STATE \textbf{(1) Discriminator optimization} 
		\STATE Draw $k_D$ samples from the dataset $\{(\tilde{\mathbf{x}}(j), \mathbf{m}(j))\}_{j=1}^{k_D}$
		\STATE Draw $k_D$ i.i.d. samples, $\{\mathbf{z}(j)\}_{j=1}^{k_D}$, of $\mathbf{Z}$
		\STATE Draw $k_D$ i.i.d. samples, $\{\mathbf{b}(j)\}_{j=1}^{k_D}$, of $\mathbf{B}$
		\FOR {$j = 1, ..., k_D$}
		\STATE $\bar{\mathbf{x}}(j) \gets G(\tilde{\mathbf{x}}(j), \mathbf{m}(j), \mathbf{z}(j))$
		\STATE $\hat{\mathbf{x}}(j) \gets \mathbf{m}(j) \odot \tilde{\mathbf{x}}(j) + (\mathbf{1 - m}(j)) \odot \bar{\mathbf{x}}(j)$
		\STATE $\mathbf{h}(j) = \mathbf{b}(j) \odot \mathbf{m}(j) + 0.5(\mathbf{1 - b}(j))$
		\ENDFOR
		\STATE Update $D$ using stochastic gradient descent (SGD) 
		\begin{equation*}
		\nabla_{D}-\sum_{j=1}^{k_D} \mathcal{L}_D(\mathbf{m}(j), D(\hat{\mathbf{x}}(j), \mathbf{h}(j)), \mathbf{b}(j))
		\end{equation*}
		\STATE \textbf{(2) Generator optimization} 
		\STATE Draw $k_G$ samples from the dataset $\{(\tilde{\mathbf{x}}(j), \mathbf{m}(j))\}_{j=1}^{k_G}$
		\STATE Draw $k_G$ i.i.d. samples, $\{\mathbf{z}(j)\}_{j=1}^{k_G}$ of $\mathbf{Z}$
		\STATE Draw $k_G$ i.i.d. samples, $\{\mathbf{b}(j)\}_{j=1}$ of $\mathbf{B}$
		\FOR {$j = 1, ..., k_G$}
		\STATE $\mathbf{h}(j) = \mathbf{b}(j) \odot \mathbf{m}(j) + 0.5(\mathbf{1 - b}(j))$
		\ENDFOR
		\STATE Update $G$ using SGD (for fixed $D$) 
		\[
		\nabla_{G}\sum_{j=1}^{k_{G}}\mathcal{L}_{G}(\mathbf{m}(j),\hat{\mathbf{m}}(j), \mathbf{b}(j))+\alpha\mathcal{L}_{M}(\mathbf{x}(j),\tilde{\mathbf{x}}(j))
		\]
		\ENDWHILE
	\end{algorithmic} 
\end{algorithm}

    \begin{table*}[t!]
        \renewcommand{\arraystretch}{1.3}
        \caption{Source of gains in GAIN algorithm (Mean $\pm$ Std of RMSE (Gain (\%)))}    
        \label{table:sourceofgain}
        \centering
        \begin{tabular}{ |c|| c | c  |  c| c | c |  }
            \toprule
            \textbf{Algorithm}    & \textbf{Breast} &  \textbf{Spam} & \textbf{Letter}& \textbf{Credit} & \textbf{News}   \\ \midrule
            \textbf{GAIN} & \textbf{.0546 $\pm$ .0006} & \textbf{.0513$\pm$ .0016} & \textbf{.1198$\pm$ .0005}  & \textbf{.1858 $\pm$ .0010} & \textbf{.1441 $\pm$ .0007}  \\   \midrule
            GAIN w/o  & .0701 $\pm$ .0021  & .0676 $\pm$ .0029 & .1344 $\pm$ .0012  & .2436 $\pm$ .0012 & .1612 $\pm$ .0024   \\ 
             $\mathcal{L}_G$& \textbf{(22.1\%)} & \textbf{(24.1\%) }& \textbf{(10.9\%) }& \textbf{(23.7\%)} & \textbf{(10.6\%)} \\  \midrule
            GAIN w/o   & .0767 $\pm$ .0015 & .0672 $\pm$ .0036 & .1586 $\pm$ .0024 & .2533 $\pm$ .0048 & .2522 $\pm$ .0042  \\
            $\mathcal{L}_M$& \textbf{(28.9\%)} & \textbf{(23.7\%) }& \textbf{(24.4\%) }& \textbf{(26.7\%)} & \textbf{(42.9\%)}\\   \midrule
            GAIN w/o  & .0639 $\pm$ .0018 & .0582 $\pm$ .0008 & .1249 $\pm$ .0011  & .2173 $\pm$ .0052 & .1521 $\pm$ .0008   \\ 
            Hint & \textbf{(14.6\%)} & \textbf{(11.9\%)} & \textbf{(4.1\%)} & \textbf{(14.5\%)} & \textbf{(5.3\%)} \\  \midrule
            GAIN w/o   &.0782 $\pm$ .0016  & .0700 $\pm$ .0064& .1671 $\pm$ .0052 & .2789 $\pm$ .0071& .2527 $\pm$ .0052  \\
            Hint \& $\mathcal{L}_M$ & \textbf{(30.1\%)} & \textbf{(26.7\%)} & \textbf{(28.3\%)} & \textbf{(33.4\%)} & \textbf{(43.0\%)}\\
            \bottomrule
        \end{tabular}
    \end{table*}

To achieve this, we define two different loss functions. The first, $\mathcal{L}_G : \{0, 1\}^d \times [0, 1]^d \times \{0, 1\}^d \to \mathbb{R}$, is given by
\begin{align}
 \mathcal{L}_G(\mathbf{m},\hat{\mathbf{m}}, \mathbf{b})&=-\sum_{i : b_i = 0}(1-m_{i})\log(\hat{m}_{i}),
\end{align}
and the second, $\mathcal{L}_M : \mathbb{R}^d \times \mathbb{R}^d \to \mathbb{R}$, by
\begin{align}
 \mathcal{L}_M(\mathbf{x},\mathbf{x}')&=\sum_{i=1}^{d}m_i L_M(x_{i},{x}_{i}'),
\end{align}
where
\[
 L_{M}(x_i,x_i')=\begin{cases}
(x_{i}'-x_{i})^{2}, & \text{if \ensuremath{x_i} is continuous},\\
-x_{i}\log(x_{i}'), & \text{if \ensuremath{x_i} is binary}.
\end{cases}
\]
As can be seen from their definitions, $\mathcal{L}_G$ will apply to the missing components ($m_i = 0$) and $\mathcal{L}_M$ will apply to the observed components ($m_i = 1$).

$\mathcal{L}_{G}(\mathbf{m},{\hat{\mathbf{m}}})$ is smaller when $\hat{m}_i$ is closer to $1$ for $i$ such that $m_i = 0$. That is, $\mathcal{L}_{G}(\mathbf{m},{\hat{\mathbf{m}}})$ is smaller when $D$ is less able to identify the imputed values as being imputed (it falsely categorizes them as observed). $\mathcal{L}_{M}(\mathbf{x},{\tilde{\mathbf{x}}})$ is minimized when the reconstructed features (i.e. the values $G$ outputs for features that were observed) are close to the actually observed features.

$G$ is then trained to minimize the weighted sum of the two losses as follows:
\begin{align*}\label{eq:optG}
\min_{G}\sum_{j=1}^{k_{G}}\mathcal{L}_{G}(\mathbf{m}(j),\hat{\mathbf{m}}(j), \mathbf{b}(j))+\alpha\mathcal{L}_{M}(\tilde{\mathbf{x}}(j), \hat{\mathbf{x}}(j)),
\end{align*}
where $\alpha$ is a hyper-parameter. 

The pseudo-code is presented in Algorithm \ref{alg:pseudo}.

\section{Experiments}\label{sect:experiments}
In this section, we validate the performance of GAIN using multiple real-world datasets. In the first set of experiments we qualitatively analyze the properties of GAIN. In the second we quantitatively evaluate the imputation performance of GAIN using various UCI datasets \cite{UCI}, giving comparisons with state-of-the-art imputation methods. In the third we evaluate the performance of GAIN in various settings (such as on datasets with different missing rates). In the final set of experiments we evaluate GAIN against other imputation algorithms when the goal is to perform prediction on the imputed dataset.

We conduct each experiment 10 times and within each experiment we use 5-cross validations. We report either RMSE or AUROC as the performance metric along with their standard deviations across the 10 experiments. Unless otherwise stated, missingness is applied to the datasets by randomly removing 20\% of all data points (MCAR).

\subsection{Source of gain}
The potential sources of gain for the GAIN framework are: the use of a GAN-like architecture (through $\mathcal{L}_G$), the use of reconstruction error in the loss ($\mathcal{L}_M$), and the use of the hint ($\mathbf{H}$). In order to understand how each of these affects the performance of GAIN, we exclude one or two of them and compare the performances of the resulting architectures against the full GAIN architecture.

Table \ref{table:sourceofgain} shows that the performance of GAIN is improved when all three components are included. More specifically, the full GAIN framework has a 15\% improvement over the simple auto-encoder model (i.e. GAIN w/o $\mathcal{L}_G$). Furthermore, utilizing the hint vector additionally gives improvements of 10\%.

    \begin{table*}[t!]
        \renewcommand{\arraystretch}{1.3}
        \caption{Imputation performance in terms of RMSE (Average $\pm$ Std of RMSE)}    
        \label{table:impute}
        \centering
        \begin{tabular}{ |c|| c | c  |  c| c | c |   }
            \toprule
            \textbf{Algorithm}    & \textbf{Breast} &  \textbf{Spam} & \textbf{Letter}& \textbf{Credit} & \textbf{News}   \\ \midrule
            \textbf{GAIN} & \textbf{.0546 $\pm$ .0006} & \textbf{.0513$\pm$ .0016} & \textbf{.1198$\pm$ .0005}  & \textbf{.1858 $\pm$ .0010} & \textbf{.1441 $\pm$ .0007}  \\  \midrule
            MICE  & .0646 $\pm$ .0028 & .0699 $\pm$ .0010 & .1537 $\pm$ .0006  & .2585 $\pm$ .0011 & .1763 $\pm$ .0007  \\ 
            MissForest  & .0608 $\pm$ .0013 & .0553 $\pm$ .0013 & .1605 $\pm$ .0004  & .1976 $\pm$ .0015 & .1623 $\pm$ 0.012
             \\ 
            Matrix  & .0946 $\pm$ .0020 & .0542 $\pm$ .0006 & .1442 $\pm$ .0006  & .2602 $\pm$ .0073 & .2282 $\pm$ .0005 \\
            Auto-encoder  & .0697 $\pm$ .0018 & .0670 $\pm$ .0030 & .1351 $\pm$ .0009  & .2388 $\pm$ .0005 & .1667 $\pm$ .0014   \\
            EM & .0634 $\pm$ .0021 & .0712 $\pm$ .0012 & .1563 $\pm$ .0012  & .2604 $\pm$ .0015 & .1912 $\pm$ .0011   \\ \bottomrule
        \end{tabular}
    \end{table*}
    
\subsection{Quantitative analysis of GAIN}
We use five real-world datasets from UCI Machine Learning Repository
\cite{UCI} (Breast, Spam, Letter, Credit, and News) to quantitatively evaluate the imputation performance of GAIN. Details of each dataset can be found in the Supplementary Materials.

In table \ref{table:impute} we report the RMSE (and its standard deviation) for GAIN and 5 other state-of-the-art
imputation methods: MICE \cite{MICE,MICE-R}, MissForest \cite{missforest},
Matrix completion (Matrix) \cite{Mat-0}, Auto-encoder \cite{autoencoder}
and Expectation-maximization (EM) \cite{EM}. As can be seen from the table, GAIN significantly outperforms each benchmark. Results for the imputation quality of categorical variables in this experiment are given in the Supplementary Materials.

\subsection{GAIN in different settings}
    \begin{figure*}[t!]
    \centering
    \includegraphics[width=\textwidth]{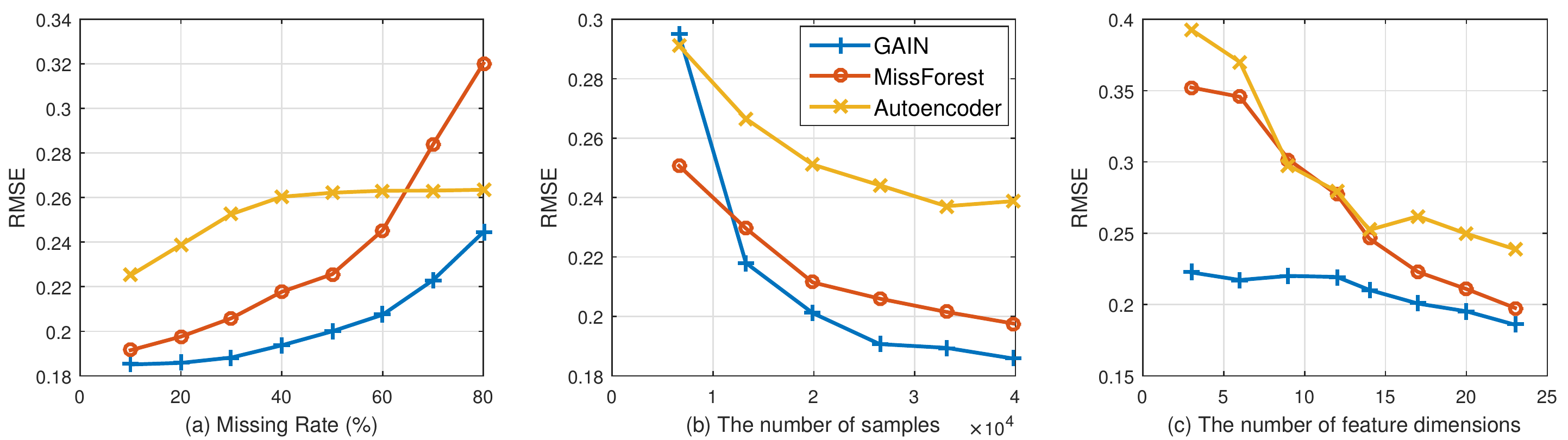}
    \caption{RMSE performance in different settings: (a) Various missing rates, (b) Various number of samples, (c) Various feature dimensions}
    \label{fig:setting}
    \end{figure*}
To better understand GAIN, we conduct several experiments in which we vary the
missing rate, the number of samples, and the number of dimensions
using Credit dataset. Fig. \ref{fig:setting} shows the performance
(RMSE) of GAIN within these different settings in comparison to the two most competitive benchmarks (MissForest and Auto-encoder). Fig. \ref{fig:setting} (a) shows that, even though the performance of each algorithm decreases as missing rates increase, GAIN consistently outperforms the benchmarks across the entire range of missing rates.

Fig. \ref{fig:setting} (b) shows that as the number of samples increases, the performance improvements of GAIN over the benchmarks also increases. This is due to the large number of parameters in GAIN that need to be optimized, however, as demonstrated on the Breast dataset (in Table \ref{table:impute}), GAIN is still able to outperform the benchmarks even when the number of samples is relatively small.

Fig. \ref{fig:setting} (c) shows that GAIN is also robust to the number of feature dimensions. On the other hand, the discriminative model (MissForest) cannot as easily cope when the number of feature dimensions is small.

\subsection{Prediction Performance}
We now compare GAIN against the same benchmarks with respect to the accuracy of post-imputation prediction. For this purpose, we use Area Under the Receiver Operating Characteristic Curve (AUROC) as the measure of performance. To be fair to all methods, we use the same predictive model (logistic regression) in all cases.

Comparisons are made on all datasets except Letter (as it has multi-class labels) and the results are reported in Table \ref{tab:Prediction}.

\begin{table*}[t!]
	\renewcommand{\arraystretch}{1.3}
	\caption{Prediction performance comparison}
	\label{tab:Prediction}
	\centering
	\begin{tabular}{|c|c|c|c|c|}
		\toprule
		\multirow{2}{*}{\textbf{Algorithm}}&\multicolumn{4}{c|}{\textbf{AUROC (Average $\pm$ Std)}}  \\
		\cmidrule{2-5}
		& \textbf{Breast} & \textbf{Spam} &  \textbf{Credit} & \textbf{News} \\
		\midrule
		\textbf{GAIN}& \textbf{.9930 $\pm$ .0073} & \textbf{.9529 $\pm$ .0023}  & \textbf{.7527 $\pm$ .0031} & \textbf{.9711 $\pm$ .0027}  \\
		\midrule
		MICE & .9914 $\pm$ .0034  & .9495 $\pm$ .0031 & .7427 $\pm$ .0026& .9451 $\pm$ .0037\\
		MissForest  & .9860 $\pm$ .0112&.9520 $\pm$ .0061&.7498 $\pm$ .0047& .9597 $\pm$ .0043 \\
		Matrix   & .9897 $\pm$ .0042 & .8639 $\pm$ .0055 & .7059 $\pm$ .0150 & .8578 $\pm$ .0125  \\
		Auto-encoder& .9916 $\pm$ .0059 & .9403 $\pm$ .0051 & .7485 $\pm$ .0031 & .9321 $\pm$ .0058 \\
		EM& .9899 $\pm$ .0147 & .9217 $\pm$ .0093 & .7390 $\pm$ .0079 & .8987 $\pm$ .0157 \\
		\bottomrule
	\end{tabular}
\end{table*}

As Table \ref{tab:Prediction} shows, GAIN, which we have already shown to achieve the best imputation accuracy (in Table \ref{table:impute}), yields the best post-imputation prediction accuracy. However, even in cases where the improvement in imputation accuracy is large, the  improvements in prediction accuracy are not always significant. This is probably due to the fact that there is sufficient information in the (80\%) observed data to predict the label.

\begin{figure}[t!]
	\center
	\includegraphics[width=0.5\textwidth]{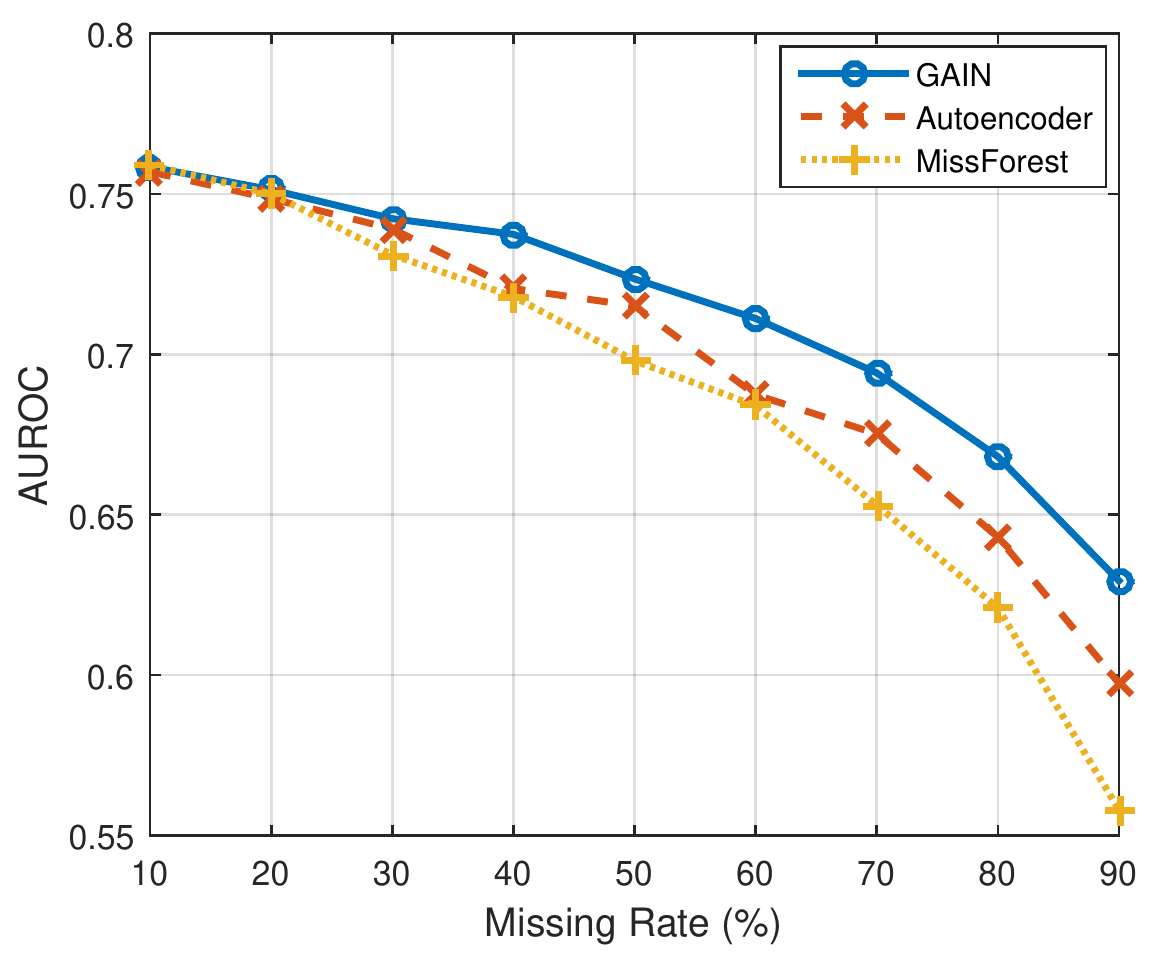}
	\caption{The AUROC performance with various missing rates with Credit dataset}
	\label{fig:prediction_missing_rate}
\end{figure}

\textbf{Prediction accuracy with various missing rates: }In this experiment, we evaluate the post-imputation prediction performance when the missing rate of the dataset is varied. Note that every dataset (except Letter) has their own binary label.

The results of this experiment (for GAIN and the two most competitive benchmarks) are shown in Fig. \ref{fig:prediction_missing_rate}. In particular, the performance of GAIN is significantly better than the other two for higher missing rates, this is due to the fact that as the information contained in the observed data decreases (due to more values being missing), the imputation quality becomes more important, and GAIN has already been shown to provide (significantly) better quality imputations.

\subsection{Congeniality of GAIN}

The congeniality of an imputation model is its ability to impute values that respect the feature-label relationship \cite{congeniality_meng, reason_of_imputation1, reason_of_imputation2}. The congeniality of an imputation model can be evaluated by measuring the effects on the feature-label relationships after the imputation. We compare the logistic regression parameters, $\mathbf{w}$, learned from the complete Credit dataset with the parameters, $\hat{\mathbf{w}}$, learned from an incomplete Credit dataset by first imputing and then performing logistic regression.

We report the mean and standard deviation of both the mean bias $(||\mathbf{w} - \hat{\mathbf{w}}||_1)$ and the mean square error $(||\mathbf{w} - \hat{\mathbf{w}}||_2)$ for each method in Table \ref{tab:congeniality}. These quantities being lower indicates that the imputation algorithm better respects the relationship between feature and label. As can be seen in the table, GAIN achieves significantly lower mean bias and mean square error than other state-of-the-art imputation algorithms (from 8.9\% to 79.2\% performance improvements).

\begin{table}[t!]
	\renewcommand{\arraystretch}{1.3}
	\caption{Congeniality of imputation models}
	\label{tab:congeniality}
	\centering
	\begin{tabular}{|c|c|c|}
		\toprule
		\multirow{2}{*}{\textbf{Algorithm}} & \textbf{Mean Bias} & \textbf{MSE} \\
		& $(||\mathbf{w} - \hat{\mathbf{w}}||_1)$ & $(||\mathbf{w} - \hat{\mathbf{w}}||_2)$\\
		\midrule
		\textbf{GAIN} & \textbf{0.3163$\pm$ 0.0887} & \textbf{0.5078$\pm$ 0.1137} \\ \midrule
		MICE   & 0.8315 $\pm$ 0.2293 & 0.9467 $\pm$ 0.2083 \\
		MissForest & 0.6730 $\pm$ 0.1937 & 0.7081 $\pm$ 0.1625 \\
		Matrix & 1.5321 $\pm$ 0.0017 & 1.6660 $\pm$ 0.0015 \\
		Auto-encoder & 0.3500 $\pm$ 0.1503 & 0.5608 $\pm$0.1697  \\
		EM  & 0.8418 $\pm$ 0.2675 & 0.9369 $\pm$ 0.2296 \\
		\bottomrule
	\end{tabular}
\end{table}

\section{Conclusion}\label{sect:conclusion}
We propose a generative model for missing data imputation, GAIN. This novel architecture generalizes the well-known GAN such that it can deal with the unique characteristics of the imputation problem. Various experiments with real-world datasets show that GAIN significantly outperforms state-of-the-art imputation techniques. The development of a new, state-of-the-art technique for imputation can have transformative impacts; most datasets in medicine as well as in other domains have missing data. Future work will investigate the performance of GAIN in recommender systems, error concealment as well as in active sensing \cite{activesensing}. Preliminary results in error concealment using the MNIST dataset \cite{mnist} can be found in the Supplementary Materials - see Fig. 4 and 5. 

\newpage
\section*{Acknowledgement}
The authors would like to thank the reviewers for their helpful comments. The research presented in this paper was supported by the Office of Naval Research (ONR) and the NSF (Grant number: ECCS1462245, ECCS1533983, and ECCS1407712).


\begin{thebibliography}{29}
	\providecommand{\natexlab}[1]{#1}
	\providecommand{\url}[1]{\texttt{#1}}
	\expandafter\ifx\csname urlstyle\endcsname\relax
	\providecommand{\doi}[1]{doi: #1}\else
	\providecommand{\doi}{doi: \begingroup \urlstyle{rm}\Url}\fi
	
	\bibitem[Alaa et~al.(2018)Alaa, Yoon, Hu, and van~der Schaar]{ahmed_tbme}
	Alaa, A.~M., Yoon, J., Hu, S., and van~der Schaar, M.
	\newblock Personalized risk scoring for critical care prognosis using mixtures
	of gaussian processes.
	\newblock \emph{IEEE Transactions on Biomedical Engineering}, 65\penalty0
	(1):\penalty0 207--218, 2018.
	
	\bibitem[Allen \& Li(2016)Allen and Li]{GAN-imagecomplete}
	Allen, A. and Li, W.
	\newblock \emph{Generative Adversarial Denoising Autoencoder for Face
		Completion}, 2016.
	\newblock URL
	\url{https://www.cc.gatech.edu/~hays/7476/projects/Avery_Wenchen/}.
	
	\bibitem[Barnard \& Meng(1999)Barnard and Meng]{Medimpute1}
	Barnard, J. and Meng, X.-L.
	\newblock Applications of multiple imputation in medical studies: from aids to
	nhanes.
	\newblock \emph{Statistical methods in medical research}, 8\penalty0
	(1):\penalty0 17--36, 1999.
	
	\bibitem[Burgess et~al.(2013)Burgess, White, Resche-Rigon, and
	Wood]{reason_of_imputation1}
	Burgess, S., White, I.~R., Resche-Rigon, M., and Wood, A.~M.
	\newblock Combining multiple imputation and meta-analysis with individual
	participant data.
	\newblock \emph{Statistics in medicine}, 32\penalty0 (26):\penalty0 4499--4514,
	2013.
	
	\bibitem[Buuren \& Groothuis-Oudshoorn(2011)Buuren and
	Groothuis-Oudshoorn]{MICE-R}
	Buuren, S. and Groothuis-Oudshoorn, K.
	\newblock mice: Multivariate imputation by chained equations in r.
	\newblock \emph{Journal of statistical software}, 45\penalty0 (3), 2011.
	
	\bibitem[Buuren \& Oudshoorn(2000)Buuren and Oudshoorn]{MICE}
	Buuren, S.~v. and Oudshoorn, C.
	\newblock Multivariate imputation by chained equations: Mice v1. 0 user's
	manual.
	\newblock Technical report, TNO, 2000.
	
	\bibitem[Deng et~al.(2016)Deng, Chang, Ido, and Long]{reason_of_imputation2}
	Deng, Y., Chang, C., Ido, M.~S., and Long, Q.
	\newblock Multiple imputation for general missing data patterns in the presence
	of high-dimensional data.
	\newblock \emph{Scientific reports}, 6:\penalty0 21689, 2016.
	
	\bibitem[Garc{\'\i}a-Laencina et~al.(2010)Garc{\'\i}a-Laencina,
	Sancho-G{\'o}mez, and Figueiras-Vidal]{EM}
	Garc{\'\i}a-Laencina, P.~J., Sancho-G{\'o}mez, J.-L., and Figueiras-Vidal,
	A.~R.
	\newblock Pattern classification with missing data: a review.
	\newblock \emph{Neural Computing and Applications}, 19\penalty0 (2):\penalty0
	263--282, 2010.
	
	\bibitem[Gondara \& Wang(2017)Gondara and Wang]{autoencoder}
	Gondara, L. and Wang, K.
	\newblock Multiple imputation using deep denoising autoencoders.
	\newblock \emph{arXiv preprint arXiv:1705.02737}, 2017.
	
	\bibitem[Goodfellow et~al.(2014)Goodfellow, Pouget-Abadie, Mirza, Xu,
	Warde-Farley, Ozair, Courville, and Bengio]{GAN}
	Goodfellow, I., Pouget-Abadie, J., Mirza, M., Xu, B., Warde-Farley, D., Ozair,
	S., Courville, A., and Bengio, Y.
	\newblock Generative adversarial nets.
	\newblock In \emph{Advances in Neural information processing systems}, pp.\
	2672--2680, 2014.
	
	\bibitem[Kreindler \& Lumsden(2012)Kreindler and Lumsden]{Missing_Book}
	Kreindler, D.~M. and Lumsden, C.~J.
	\newblock The effects of the irregular sample and missing data in time series
	analysis.
	\newblock \emph{Nonlinear Dynamical Systems Analysis for the Behavioral
		Sciences Using Real Data}, pp.\  135, 2012.
	
	\bibitem[LeCun \& Cortes(2010)LeCun and Cortes]{mnist}
	LeCun, Y. and Cortes, C.
	\newblock {MNIST} handwritten digit database.
	\newblock 2010.
	\newblock URL \url{http://yann.lecun.com/exdb/mnist/}.
	
	\bibitem[Lichman(2013)]{UCI}
	Lichman, M.
	\newblock {UCI} machine learning repository, 2013.
	\newblock URL \url{http://archive.ics.uci.edu/ml}.
	
	\bibitem[Mackinnon(2010)]{Medimpute2}
	Mackinnon, A.
	\newblock The use and reporting of multiple imputation in medical research--a
	review.
	\newblock \emph{Journal of internal medicine}, 268\penalty0 (6):\penalty0
	586--593, 2010.
	
	\bibitem[Mazumder et~al.(2010{\natexlab{a}})Mazumder, Hastie, and
	Tibshirani]{Mat-0}
	Mazumder, R., Hastie, T., and Tibshirani, R.
	\newblock Spectral regularization algorithms for learning large incomplete
	matrices.
	\newblock \emph{Journal of machine learning research}, 11\penalty0
	(Aug):\penalty0 2287--2322, 2010{\natexlab{a}}.
	
	\bibitem[Mazumder et~al.(2010{\natexlab{b}})Mazumder, Hastie, and
	Tibshirani]{Mat-3}
	Mazumder, R., Hastie, T., and Tibshirani, R.
	\newblock Spectral regularization algorithms for learning large incomplete
	matrices.
	\newblock \emph{Journal of machine learning research}, 11\penalty0
	(Aug):\penalty0 2287--2322, 2010{\natexlab{b}}.
	
	\bibitem[Meng(1994)]{congeniality_meng}
	Meng, X.-L.
	\newblock Multiple-imputation inferences with uncongenial sources of input.
	\newblock \emph{Statistical Science}, pp.\  538--558, 1994.
	
	\bibitem[Purwar \& Singh(2015)Purwar and Singh]{Medimpute4}
	Purwar, A. and Singh, S.~K.
	\newblock Hybrid prediction model with missing value imputation for medical
	data.
	\newblock \emph{Expert Systems with Applications}, 42\penalty0 (13):\penalty0
	5621--5631, 2015.
	
	\bibitem[Rubin(2004)]{Rubin}
	Rubin, D.~B.
	\newblock \emph{Multiple imputation for nonresponse in surveys}, volume~81.
	\newblock John Wiley \& Sons, 2004.
	
	\bibitem[Schnabel et~al.(2016)Schnabel, Swaminatan, Singh, Chandak, and
	Joachims]{Mat-2}
	Schnabel, T., Swaminatan, A., Singh, A., Chandak, N., and Joachims, T.
	\newblock Recommendations as treatments: debiasing learning and evolution.
	\newblock \emph{ICML}, 2016.
	
	\bibitem[Stekhoven \& B{\"u}hlmann(2011)Stekhoven and B{\"u}hlmann]{missforest}
	Stekhoven, D.~J. and B{\"u}hlmann, P.
	\newblock Missforest—non-parametric missing value imputation for mixed-type
	data.
	\newblock \emph{Bioinformatics}, 28\penalty0 (1):\penalty0 112--118, 2011.
	
	\bibitem[Sterne et~al.(2009)Sterne, White, Carlin, Spratt, Royston, Kenward,
	Wood, and Carpenter]{Medimpute3}
	Sterne, J.~A., White, I.~R., Carlin, J.~B., Spratt, M., Royston, P., Kenward,
	M.~G., Wood, A.~M., and Carpenter, J.~R.
	\newblock Multiple imputation for missing data in epidemiological and clinical
	research: potential and pitfalls.
	\newblock \emph{BMJ}, 338:\penalty0 b2393, 2009.
	
	\bibitem[Vincent et~al.(2008)Vincent, Larochelle, Bengio, and Manzagol]{DAE}
	Vincent, P., Larochelle, H., Bengio, Y., and Manzagol, P.-A.
	\newblock Extracting and composing robust features with denoising autoencoders.
	\newblock In \emph{Proceedings of the 25th International conference on Machine
		learning}, pp.\  1096--1103. ACM, 2008.
	
	\bibitem[Yoon et~al.(2017)Yoon, Davtyan, and van~der Schaar]{yoon_jbhi}
	Yoon, J., Davtyan, C., and van~der Schaar, M.
	\newblock Discovery and clinical decision support for personalized healthcare.
	\newblock \emph{IEEE journal of biomedical and health informatics}, 21\penalty0
	(4):\penalty0 1133--1145, 2017.
	
	\bibitem[Yoon et~al.(2018{\natexlab{a}})Yoon, Jordon, and van~der
	Schaar]{yoon_ganite}
	Yoon, J., Jordon, J., and van~der Schaar, M.
	\newblock {GANITE}: Estimation of individualized treatment effects using
	generative adversarial nets.
	\newblock In \emph{International Conference on Learning Representations},
	2018{\natexlab{a}}.
	\newblock URL \url{https://openreview.net/forum?id=ByKWUeWA-}.
	
	\bibitem[Yoon et~al.(2018{\natexlab{b}})Yoon, Zame, Banerjee, Cadeiras, Alaa,
	and van~der Schaar]{yoon_plosone}
	Yoon, J., Zame, W.~R., Banerjee, A., Cadeiras, M., Alaa, A.~M., and van~der
	Schaar, M.
	\newblock Personalized survival predictions via trees of predictors: An
	application to cardiac transplantation.
	\newblock \emph{PloS one}, 13\penalty0 (3):\penalty0 e0194985,
	2018{\natexlab{b}}.
	
	\bibitem[Yoon et~al.(2018{\natexlab{c}})Yoon, Zame, and van~der
	Schaar]{yoon_deep}
	Yoon, J., Zame, W.~R., and van~der Schaar, M.
	\newblock Deep sensing: Active sensing using multi-directional recurrent neural
	networks.
	\newblock In \emph{International Conference on Learning Representations},
	2018{\natexlab{c}}.
	\newblock URL \url{https://openreview.net/forum?id=r1SnX5xCb}.
	
	\bibitem[Yu et~al.(2016)Yu, Rao, and Dhillon]{Mat-1}
	Yu, H.-F., Rao, H., and Dhillon, I.~S.
	\newblock Temporal regularized matrix factorization for high-dimensional time
	series prediction.
	\newblock \emph{NIPS}, 2016.
	
	\bibitem[Yu et~al.(2009)Yu, Krishnapuram, Rosales, and Rao]{activesensing}
	Yu, S., Krishnapuram, B., Rosales, R., and Rao, R.~B.
	\newblock Active sensing.
	\newblock In \emph{Artificial Intelligence and Statistics}, pp.\  639--646,
	2009.
	
\end{thebibliography}
\end{document}